\documentclass{article}

 \usepackage[preprint]{neurips_2026}

\usepackage[utf8]{inputenc} 
\usepackage[T1]{fontenc}    
\usepackage{hyperref}       
\usepackage{url}            
\usepackage{booktabs}       
\usepackage{amsfonts}       
\usepackage{nicefrac}       
\usepackage{microtype}      
\usepackage{xcolor}         
\usepackage{mathtools} 

\usepackage{xr}
\makeatletter

\newcommand*{\addFileDependency}[1]{
	\typeout{(#1)}
	\@addtofilelist{#1}
	\IfFileExists{#1}{}{\typeout{No file #1.}}
}\makeatother

\newcommand*{\myexternaldocument}[1]{%
	\externaldocument{#1}%
	\addFileDependency{#1.tex}%
	\addFileDependency{#1.aux}%
}

\myexternaldocument{0-supple}

\usepackage{enumitem}
\setlist[itemize]{leftmargin=*}
\setlist[enumerate]{leftmargin=*}

\usepackage{amsmath,amssymb,amsthm,mathabx}
\usepackage{mathtools}

\newtheorem{assumption}{Assumption}
\newtheorem{theorem}{Theorem}

\usepackage{multirow}
\usepackage{algorithm}
\usepackage{algorithmic}

\usepackage{graphicx}
\usepackage{subcaption}

\def\TITLE{Learning to Hand Off: Provably Convergent Workflow Learning under Interface Constraints}
\title{\TITLE}

\author{%
	Jiayu Li \\
	Stern School of Business \\
	New York University \\
	New York, NY 10012 \\
	\texttt{jl15681@stern.nyu.edu} \\
	\And
	Enpei Zhang \\
	Department of Computer Science \\
	Dartmouth College \\
	Hanover, NH 03755 \\
	\texttt{enpei.zhang.gr@dartmouth.edu} \\
	\And
	Dawei Zhou \\
	Department of Computer Science \\
	Virginia Tech \\
	Blacksburg, VA 24061 \\
	\texttt{zhoud@vt.edu} \\
	\AND
	Elynn Chen \\
	Stern School of Business \\
	New York University \\
	New York, NY 10012 \\
	\texttt{elynn.chen@stern.nyu.edu} \\
	\And
	Yujun Yan \\
	Department of Computer Science \\
	Dartmouth College \\
	Hanover, NH 03755 \\
	\texttt{yujun.yan@dartmouth.edu} \\
}

\begin{document}
\maketitle
\begin{abstract}
	We study workflow learning in a setting where specialized agents hand off control through a shared artifact, each agent observes only a local function of that artifact and its own private state, and no centralized learner accesses joint trajectories -- the operating regime of multi-agent LLM pipelines that span organizational, vendor, or trust boundaries. We formalize this regime as an interface-constrained semi-Markov decision process (IC-SMDP), whose decision epochs occur at handoff times, and design IC-$Q$, an asynchronous decentralized $Q$-learning algorithm in which cross-agent coordination at every handoff is exactly one scalar. Our main result is a finite-sample bound for neural IC-$Q$ that decomposes into three independently controllable error sources: neural function-approximation error, interface representation gap, and a mixing-time residual, under the random option-duration discount. Establishing this bound requires lifting the approximate information state (AIS) framework from single-agent primitive-step MDPs to multi-agent SMDPs and controlling Markovian noise under random duration, neither of which has been done in prior work. To our knowledge this is the first finite-sample guarantee for neural $Q$-learning under decentralized partial observability. 
	Four experiments: a controlled synthetic IC-SMDP that validates the bound term-by-term, multi-LLM mathematical reasoning, multi-agent routing, and multi-agent CPU programming, show that IC-$Q$ matches a centralized oracle without any agent observing joint trajectories, with each of the three error sources scaling along its corresponding axis as the bound predicts.
\end{abstract}

\section{Introduction}

Multi-agent LLM systems coordinate specialized agents such as planners, coders, testers, retrievers, verifiers \cite{hong2023metagpt,huang2023agentcoder,gottweis2025aiscientist,yao2023react} into workflows whose end-to-end performance is shaped by {\em how the workflow is structured}: which agent acts first, when control transfers, and what information passes between them. Two distinct regimes for designing such workflows have emerged.

The {\em centralized} regime assumes a single designer with global visibility into agent trajectories. Hand-designed workflows \cite{hong2023metagpt,wu2023autogen} fix the structure in advance, while learned workflow generators \cite{zhuge2024gptswarm,zhang2024aflow,hu2024adas,fan2024workflowllm,yue2025daao} optimize it from joint trajectory data. This regime is well-developed, and, where its assumptions of centralization hold, it works.

The {\em sequential decentralized} regime is structurally different. Production systems are increasingly built as pipelines of role-specialized agents that transfer one artifact at a time, as in the assembly-line architecture of MetaGPT \cite{hong2023metagpt}, the programmer-to-tester pipeline of AgentCoder \cite{huang2023agentcoder}, the turn-based message-passing of AutoGen \cite{wu2023autogen}. Emerging agent-to-agent protocols \cite{anthropic2024mcp,google2025a2a} standardize sequential message exchange as the coordination primitive. When such pipelines span organizational, vendor, or trust boundaries \cite{yang2025agentnet}, the centralized assumptions fail: no party holds joint trajectories at training time, agents see only the artifact they receive together with their own private state, and chains of thought, scratchpads, and proprietary prompt templates are not exposed across agent boundaries by design, by safety filtering, or by API contract. The resulting decision problem combines partial observability with the cross-environment heterogeneity studied in latent-heterogeneous RL \cite{chen2024reinforcement}, where each agent's local view induces effectively a distinct learning problem. Workflow learning in this regime has, to date, been addressed only by hand-designed adaptive rules \cite{yang2025agentnet} without a decision-theoretic foundation or convergence guarantee.

This paper provides such a foundation. We study workflow learning in the sequential decentralized regime under four operating conditions that distinguish it from the centralized setting: (i) {\em sequential handoff-based control} -- exactly one agent acts at each step and transfers control through a shared artifact; (ii) {\em decentralization in both training and execution} -- no party accesses joint trajectories at any stage; (iii) {\em interface-limited observation} -- each agent decides on the basis of the artifact it receives plus its own private state, with no exposure of internal state across agent boundaries; and (iv) {\em finite-sample guarantees} -- deployments run under bounded API and compute budgets, so a designer needs to know how much sample budget suffices, not merely that convergence eventually obtains. Finite-sample analysis at this granularity has recently driven theoretical progress in adjacent areas, including transfer $Q$-learning \cite{chen2025transfer,chai2025transfer} and high-dimensional sequential decision-making under structured latent heterogeneity \cite{chen2025highdim}, and motivates an analogous treatment for decentralized agentic workflows.

\noindent
\textbf{Why existing frameworks do not extend.} Four lines of prior theory bear on this regime; each violates at least one of the four conditions above.

\noindent
\textit{$\bullet$ Decentralized POMDPs and CTDE.} Dec-POMDPs \cite{bernstein2002decpomdp,nair2005networked,oliehoek2016decpomdp,oliehoek2008exploiting} model cooperative multi-agent control under local observations but assume {\em concurrent} action with a joint reward, violating sequentiality. Centralized training with decentralized execution \cite{lowe2017maddpg,rashid2020qmix,foerster2018coma,sunehag2018vdn,iqbal2019actorattentioncritic} relaxes execution but requires joint-trajectory access at training, violating decentralization. Related transfer-RL formulations that share information across heterogeneous tasks \cite{chen2026data,chai2026optimistic} likewise presume a coordinator who observes either joint trajectories or task-level structure unavailable across vendor boundaries.

\noindent
\textit{$\bullet$ The options framework.} Options \cite{sutton1999options,bacon2017optioncritic,bradtke1994smdp,precup2000options} provide the natural temporal-extension primitive but assume a single meta-controller observing the {\em full} state at each decision epoch, violating interface-limited observation. Recent work on prior-aligned meta-RL with finite-horizon guarantees \cite{zhou2025prior} inherits the same full-observation assumption at the meta-controller level.

\noindent
\textit{$\bullet$ Approximate information states.} The AIS framework \cite{subramanian2022ais,kara2022finite,sinha2024agentstate,sinha2024periodic} treats single-agent partial observability in primitive-step MDPs. Lifting it to decentralized AIS observations across $N$ agents composing through handoffs requires controlling the AIS error under the random discount $\gamma^{\tau_{k+1}}$ and over a disjoint observation space -- neither has been done. The common-information extension of \cite{kao2022common} addresses concurrent Dec-POMDPs through a fictitious coordinator and is not applicable to sequential handoff control. Closely related are non-stationary RL settings where the environment itself shifts across episodes \cite{chai2025deep}, for which transfer guarantees require additional structural assumptions that the IC-SMDP does not impose.

\noindent
\textit{$\bullet$ Multi-agent LLM orchestration.} Recent systems \cite{yang2025agentnet,zhuge2024gptswarm,zhang2024aflow,hu2024adas} route tasks through learned or hand-designed DAGs of LLM agents. AgentNet \cite{yang2025agentnet} is closest in spirit but operates via hand-designed adaptive rules (moving averages over edge weights, retrieval-based heuristics, capability-vector updates) with no Bellman recursion and no convergence guarantee, violating finite-sample guarantees. Workflow learning has also been studied for economic and operational decision-making, where finite-sample optimality under cross-market heterogeneity \cite{chen2026transfer,zhang2025transfer} has driven recent theoretical advances -- none of which apply directly to the handoff-based interface regime we study. A full discussion appears in Appendix \ref{sec:related_work}.

\noindent
\textbf{Contributions.} We provide the first framework that respects all four conditions and prove finite-sample convergence inside it.

\noindent
\textit{$\bullet$} {\em Formal model.} We introduce the {\em interface-constrained semi-Markov decision process} (IC-SMDP) (\S\ref{sec:IC-SMDP}) and show that it induces a well-defined SMDP at handoff times to which the AIS framework lifts with quantifiable interface gap $(\varepsilon_\phi, \delta_\phi)$.

\noindent
\textit{$\bullet$} {\em Decentralized algorithm.} We design IC-$Q$ (\S\ref{sec:algorithm}), an asynchronous $Q$-learning algorithm in which cross-agent coordination at every handoff is a single scalar -- minimal communication overhead under the bandwidth and API-call constraints of cross-vendor deployment.

\noindent
\textit{$\bullet$} {\em Finite-sample convergence.} We prove a finite-sample bound (Theorem \ref{thm:finite-sample-preconfigured}, \S\ref{sec:convergence}) decomposing into three independently controllable error sources -- neural approximation, interface representation gap, and mixing-time residual. Three challenges arise that prior analyses do not face: a Bellman contraction under random rather than fixed discount, an AIS gap propagating at the option scale rather than the primitive-step scale, and Markovian-noise control under random option duration. To our knowledge this is {\em the first such guarantee under decentralized partial observability through approximate information states.}

\noindent
\textit{$\bullet$} {\em Empirical validation.} On four tasks: a controlled synthetic IC-SMDP that isolates each error term, multi-LLM mathematical reasoning, multi-agent routing, and multi-agent CPU programming, IC-$Q$ matches a centralized oracle and recovers the best predefined workflow without any agent observing joint trajectories. The synthetic IC-SMDP further validates the bound term-by-term, with each of the three error sources scaling along its corresponding axis as predicted (\S\ref{sec:experiments}).

\section{Interface-Constrained Semi-Markov Decision Process} \label{sec:IC-SMDP}

Modern multi-agent LLM systems coordinate by passing artifacts such as messages, intermediate solutions, scratchpad snippets between specialized agents. A planner LLM hands a task description to a coder LLM, which hands code to a tester LLM. No single component sees the full picture: the planner does not see the coder's chain of thought, and no centralized observer sees joint trajectories. We formalize this as an {\em interface-constrained semi-Markov decision process} (IC-SMDP). Although the system evolves at the primitive time scale of individual agent steps, its decision structure operates at the coarser scale of handoff times, where each agent invocation is a temporally extended option and the semi-Markov property emerges naturally.

\subsection{Formal framework}

An IC-SMDP with $N$ agents $[N] := \{1, \dots, N\}$ is a tuple
$$
\mathcal{I} = \big(\mathcal{X}, \mathcal{M}, \{\mathcal{L}_i, \mathcal{A}_i, \phi_i\}_{i \in [N]}, P, r, \gamma, \rho_0\big).
$$

\textbf{States.} Three layers of state, distinguished by who can observe them. The {\em global latent state} $x_t \in \mathcal{X}$ governs underlying task dynamics and is observed by no agent. The {\em interface state} $m_t \in \mathcal{M}$ is the artifact passed between agents -- the {\em only} channel of cross-agent information flow. The {\em private state} $\ell_t^{(i)} \in \mathcal{L}_i$ is local to agent $i$ and unobserved by others. At step $t$, exactly one agent $c_t \in [N]$ is active.

\textbf{Observation: the interface constraint.} Agent $c_t$ observes only
\begin{equation}\label{eq:obs-map} 
	o_t = \phi_{c_t}\!\big(m_t, \ell_t^{(c_t)}\big) \in \mathcal{O}_{c_t}, 
\end{equation} 
where $\phi_i \colon \mathcal{M} \times \mathcal{L}_i \to \mathcal{O}_i$ is an agent-specific observation map. Two structural restrictions are implicit: {\bf (IC1) Channel restriction} -- when $c_t$ hands off to $c_{t+1} = j$, the only information $j$ receives is $m_{t+1}$; {\bf (IC2) Representation restriction} -- agent $i$'s decision is a function of $\phi_i(m_t, \ell_t^{(i)})$ rather than $m_t$ directly. (IC1) rules out CTDE-style learning \cite{lowe2017maddpg,rashid2020qmix,foerster2018coma}, which assumes joint trajectory access; (IC2) precludes direct application of the options framework \cite{sutton1999options}, which assumes full state observation at every decision epoch.

\textbf{Actions: local operation and successor selection.} Agent $c_t$ selects a local action $a_t \in \mathcal{A}_{c_t}$ and a successor $c_{t+1} \in \mathcal{C} := [N] \cup \{\textsf{STOP}\}$, jointly written $u_t = (a_t, c_{t+1})$. A stationary decentralized policy is a collection $\pi_i(u \mid o) \in \Delta(\mathcal{U}_i)$ with $u_t \sim \pi_{c_t}(\cdot \mid o_t)$. Agent $i$ is {\em pre-configured} if $a_t$ is fixed (e.g., a frozen LLM) and only the successor distribution is learnable; otherwise it is {\em adaptable}.

\textbf{Transitions and handoff dynamics.} Conditioned on $u_t$, the system evolves under a Markov kernel $P$. If $c_{t+1} = c_t$, the active agent does not change; if $c_{t+1} = j \neq c_t$, control transfers to $j$, whose private state is preserved from its last activation. Handoff times $0 = t_0 < t_1 < \cdots$ are the steps where $c_{t+1} \neq c_t$; the interval $[t_k, t_{k+1})$ is the {\em invocation} of agent $c_{t_k}$ with random duration $\tau_{k+1} := t_{k+1} - t_k$.

\textbf{Objective.} The reward $r_t = r(x_t, m_t, c_t, a_t, c_{t+1})$ typically combines task utility with workflow costs. The decentralized workflow control problem is
\begin{equation}\label{eq:objective} 
	\max_{\pi \in \Pi} \mathbb{E}^\pi_{\rho_0}\!\left[\sum_{t=0}^{H-1} \gamma^t r_t \right], 
\end{equation} 
subject to (\ref{eq:obs-map}), maximized over {\em factored} policies, with no algorithm accessing $x_t$, joint trajectories, or other agents' private states.

\subsection{Semi-Markov reduction and AIS structure}\label{subsec:smdp-ais}

The IC-SMDP's decision structure lives at handoff times $\{t_k\}$. Each agent invocation is an option in the sense of \cite{sutton1999options}: agent $i$'s internal policy $\pi_i^{\text{int}}$ executes from primitive step $t_k$ until the next handoff $t_{k+1}$, with effective discount $\gamma^{\tau_{k+1}}$. Writing $m_k := m_{t_k}$, the {\em latent decision-epoch SMDP} has state $\mathcal{M}$, action $\mathcal{C}$, transition kernel
$P_{\text{lat}}(m', \tau \mid m, i) = \mathbb{P}(m_{k+1} = m', \tau_{k+1} = \tau \mid m_k = m, c_{t_k} = i)$, and option reward
$R_{\text{lat}}(m, i) = \mathbb{E}\!\left[\sum_{s=0}^{\tau-1} \gamma^s r_{t_k+s} \mid m_k = m, c_{t_k} = i\right]$.

No agent observes $m_k$ directly; agent $i$ observes only $\phi_i(m_k, \ell_{t_k}^{(i)})$. The {\em AIS-induced SMDP} replaces $(P_{\text{lat}}, R_{\text{lat}})$ with their conditional counterparts $(\hat P_i, \hat R_i)$ given the AIS observation. The two perspectives describe the same physical process at the decision-epoch scale; they differ only in what is observed.

\begin{assumption}[Structural conditions]
	\label{assume:structure_latent}
	(i) The joint process $(x_t, m_t, \ell_t^{(1:N)}, c_t)$ is Markov under any stationary decentralized policy. (ii) Each invocation terminates a.s.: $\mathbb{P}(\tau_{k+1} < \infty) = 1$. (iii) The interface state $m$ determines the admissible successor set.
\end{assumption}

\begin{assumption}[AIS conditions on $\{\phi_i\}$]
	\label{assume:AIS}
	There exist $\varepsilon_\phi, \delta_\phi \geq 0$ such that for every $i$, $(m, \ell^{(i)})$, and $o = \phi_i(m, \ell^{(i)})$:
	{\bf (B1) Reward sufficiency:} $|R_{\text{lat}}(m, i) - \hat R_i(o, i)| \leq \varepsilon_\phi$.
	{\bf (B2) Evolution sufficiency:} $d_\mathcal{F}(P_{\text{lat}}(\cdot \mid m, i), \hat P_i(\cdot \mid o, i)) \leq \delta_\phi$,
	where $d_\mathcal{F}$ is an integral probability metric on probability measures over $\mathcal{M} \times \mathbb{N}$.
\end{assumption}

Assumption \ref{assume:structure_latent} is mild: (i) is automatic from the IC-SMDP construction, (ii) holds for any finite-horizon IC-SMDP, and (iii) holds whenever $\mathcal{I}_i = \mathcal{M}$. Together they ensure the handoff-time process $\{(m_k, c_{t_k})\}$ is a well-defined SMDP (Appendix \ref{appendix:proof-prop1}). 

Assumption \ref{assume:AIS} is the substantive restriction.
Conditions (B1)--(B2) are the option-level analogs of the AIS definition in \cite{subramanian2022ais}, originally formulated for primitive-step single-agent partially observable MDPs. We strengthen this in two ways. First, (B1)--(B2) are stated in terms of the {\em option reward $R_{\text{lat}}$} and {\em option transition $P_{\text{lat}}$}, which integrate primitive-step rewards under the random discount $\gamma^{\tau_{k+1}}$ rather than the constant primitive discount $\gamma$. Reward sufficiency at the option scale is therefore a strictly stronger condition than primitive-step reward sufficiency: $\varepsilon_\phi$ controls error in a quantity that already reflects the option's full execution. Second, the conditions hold {\em for every agent $i$} with its own observation map $\phi_i$, not for a single global observation. This decentralization means the AIS structure must compose across agents at handoff times, which is what allows the policy correspondence below to operate over the disjoint union $\bigsqcup_i \mathcal{O}_i$ rather than a single observation space. When $\phi_i = \mathrm{id}_\mathcal{M}$ for all $i$, $\varepsilon_\phi = \delta_\phi = 0$. The choice of integral probability metric $d_\mathcal{F}$ is left to the user; total variation is the natural instantiation for finite $\mathcal{M}$, with $L_V \le R_{\max}/(1-\bar\gamma)$, while Wasserstein metrics yield tighter $L_V$ when value functions are smooth.

\textbf{Examples.} In multi-LLM mathematical reasoning (\S\ref{sec:experiments}), $m$ is the conversation history; $\phi_i$ projects to a 4-dimensional vector encoding answer status. (B1) holds because option reward is determined by answer status; (B2) holds because the next answer status given the current one is approximately independent of the verbatim history. In multi-agent routing, $\phi_i$ is the exact interface state and $\varepsilon_\phi = \delta_\phi = 0$. In multi-agent CPU programming, $\phi_i$ projects to a visible register block, and $\delta_\phi$ is controlled by the locality of the active agent's register usage.

\textbf{Policy correspondence (informal).} Under Assumptions \ref{assume:structure_latent}--\ref{assume:AIS}, every stationary decentralized policy on AIS observations corresponds to a stationary policy on the AIS-induced SMDP, and conversely. The cost of partial observability is bounded by an additive AIS gap of order $(\varepsilon_\phi + \bar\gamma L_V \delta_\phi)/(1-\bar\gamma)$, where $\bar\gamma := \sup_{m,i} \mathbb{E}[\gamma^{\tau_{k+1}} \mid m_k = m, c_{t_k} = i] \in [0, 1)$ is the effective per-epoch discount and $L_V$ is the $d_\mathcal{F}$-Lipschitz constant of the AIS-induced optimal value function. The formal statement and proof appear as Proposition \ref{thm:ic-smdp-reduction} in Appendix \ref{appendix:proof-prop1}; the bound itself is absorbed into Theorem \ref{thm:finite-sample-preconfigured} in \S\ref{sec:convergence}, where it appears as the {\em interface representation gap} term.

\textbf{Comparison with adjacent formalisms.} Single-agent MDPs are recovered when $N = 1$ and $\phi_1 = \mathrm{id}_\mathcal{M}$. Dec-POMDPs \cite{bernstein2002decpomdp,oliehoek2016decpomdp} permit concurrent action with joint reward, whereas the IC-SMDP enforces sequential control through an interface artifact -- the converse encoding does not preserve the handoff structure. The options framework \cite{sutton1999options,bacon2017optioncritic} provides the temporal-extension primitive but assumes full state observation. The IC-SMDP is thus a strict generalization of MDPs, an alternative to (rather than special case of) Dec-POMDPs, and an interface-constrained extension of options.

\section{Decentralized $Q$-Learning under the IC-SMDP}\label{sec:algorithm}

The AIS-induced SMDP of \S\ref{sec:IC-SMDP} specifies what a decentralized agent should estimate: the optimal AIS-induced option-value $\hat Q^\star_i(o, c')$ for each agent $i$, observation $o$, and successor $c'$. We derive an asynchronous decentralized $Q$-learning algorithm: IC-$Q$ that estimates $\hat Q^\star_i$ from samples without violating the interface constraint. Each agent maintains a local value estimator over its own AIS observations; cross-agent information flow at handoff is restricted to a single scalar bootstrap target.

By the AIS-induced SMDP construction, the optimal value $\hat Q^\star_i$ satisfies the SMDP Bellman equation 
\begin{equation}\label{eqn:smdp-bellman}
	\hat Q^\star_i(o, c') = \hat R_i(o, c') + \mathbb{E}\!\left[ \gamma^{\tau_{k+1}} \max_{c'' \in \mathcal{C}} \hat Q^\star_{c'}(o', c'') \,\Big|\, o, c' \right],
\end{equation}
where $o' = \phi_{c'}(m_{k+1}, \ell_{t_{k+1}}^{(c')})$ is the successor's AIS observation. Two features encode the interface constraint at the level of the Bellman recursion: the maximization is over the {\em successor's} value function $\hat Q^\star_{c'}$, not the current agent's, and $o'$ is computed under the {\em successor's} observation map $\phi_{c'}$. A naive update that locally maximizes $Q_i$ would solve the wrong fixed-point equation; the correct target requires information that, by construction, the predecessor cannot observe.

\textbf{Value passing at handoffs.} At handoff time $t_{k+1}$, the successor agent $c'$ -- which has just become active and observed $o'_{k+1}$ -- computes 
\begin{equation}\label{eqn:bootstrap-smdp} 
	b_{k+1} = \max_{c'' \in \mathcal{C}} \hat Q_{c'}(o'_{k+1}, c''), 
\end{equation} 
and transmits $b_{k+1}$ back to predecessor $i$ together with the option duration $\tau_{k+1}$ and accumulated reward $R_k$. Predecessor $i$ updates its local estimator using 
\begin{equation}\label{eqn:response-smdp} 
	y_k = R_k + \gamma^{\tau_{k+1}} b_{k+1} 
\end{equation} 
as the regression target. The cross-agent information flow at each handoff is exactly three numbers: $b_{k+1}$, $\tau_{k+1}$, $R_k$, with no exposure of internal parameters, latent state, or local observations. This realizes the right-hand side of \eqref{eqn:smdp-bellman} without violating either (IC1) or (IC2): the successor's $\max$ is computed locally where the successor's $\phi_{c'}$ and parameters live, and only the resulting scalar crosses the interface boundary.

\textbf{Successor-selection updates.} Each agent maintains a local successor-selection $Q$-function $Q_i^\beta(o, c'; \theta_i^\beta)$, with greedy successor policy $\pi_i^\beta(o) = \arg\max_{c'} Q_i^\beta(o, c'; \theta_i^\beta)$. At each handoff, $\theta_i^\beta$ is updated by minimizing $(Q_i^\beta(o_k, c'; \theta_i^\beta) - y_k)^2$. When $\tau_{k+1} = 1$, this reduces to standard $Q$-learning over interface observations; when $\tau_{k+1}$ is genuinely random, it implements asynchronous SMDP $Q$-learning with random discount $\gamma^{\tau_{k+1}}$.

\textbf{Adaptable agents.} When the local-action policy is also learned, each agent maintains a second $Q$-function $Q_i^\alpha(o, a; \theta_i^\alpha)$. The two estimators are coupled through their bootstrap targets: $Q_i^\alpha$ regresses against $\max_{c'} Q_i^\beta(o_k^+, c'; \theta_i^\beta)$ at the post-action observation, while $Q_i^\beta$ regresses against $\max_{a'} Q_{c'}^\alpha(o'_{k+1}, a'; \theta_{c'}^\alpha)$ from the successor. The value-passing structure remains symmetric, with a single scalar at each handoff, preserving the interface constraint. The two-timescale dynamics of $\theta_i^\alpha$ and $\theta_i^\beta$ require additional regularity beyond what the single-$Q$ analysis of \S\ref{sec:convergence} provides; we treat the adaptable case empirically and leave its formal convergence to future work.

We summarize the unified procedure in Algorithm \ref{alg:ic-q}.

\begin{algorithm}[htpb!]
	\caption{Decentralized $Q$-learning for IC-SMDPs (IC-$Q$)}
	\label{alg:ic-q}
	\begin{algorithmic}[1]
		\REQUIRE Agents $[N]$; AIS observation maps $\{\phi_i\}$; learning rate $\eta$; discount $\gamma$; horizon $H$; exploration $\epsilon$.
		\STATE Initialize $\{\theta_i^\beta\}_{i=1}^{N}$ \COMMENT{and $\{\theta_i^\alpha\}$ in the adaptable regime}
		\FOR{episode $= 1, 2, \ldots$}
		\STATE Sample $(x_0, m_0, \{\ell_0^{(i)}\}, c_0) \sim \rho_0$; set $k \gets 0$, $t \gets 0$, $R_k \gets 0$
		\WHILE{$t < H$ and $c_t \neq \mathsf{STOP}$}
		\STATE Active agent $i \gets c_t$ observes $o_t = \phi_i(m_t, \ell_t^{(i)})$
		\STATE Select $a_t$ from fixed $\pi_i^{\mathrm{int}}$ (pre-configured) or via $\epsilon$-greedy on $Q_i^\alpha$ (adaptable)
		\STATE Select successor $c_{t+1} \gets \epsilon\text{-greedy}(Q_i^\beta(o_t, \cdot\,; \theta_i^\beta))$
		\STATE Environment transitions; receive $r_t$; $R_k \gets R_k + \gamma^{t-t_k} r_t$
		\IF{$c_{t+1} \neq c_t$ \COMMENT{handoff}}
		\STATE $c' \gets c_{t+1}$; $o_{t+1} \gets \phi_{c'}(m_{t+1}, \ell_{t+1}^{(c')})$
		\STATE Successor computes $b_{k+1} \gets \max_{c''} Q_{c'}^\beta(o_{t+1}, c''; \theta_{c'}^\beta)$ and returns $(b_{k+1}, \tau_{k+1}, R_k)$
		\STATE Form $y_k^\beta \gets R_k + \gamma^{\tau_{k+1}} b_{k+1}$ and update $\theta_i^\beta$ by SGD on $(Q_i^\beta(o_{t_k}, c'; \theta_i^\beta) - y_k^\beta)^2$
		\IF{adaptable regime}
		\STATE Successor returns $b_{k+1}^\alpha \gets \max_{a'} Q_{c'}^\alpha(o_{t+1}, a'; \theta_{c'}^\alpha)$
		\STATE Form $y_k^\alpha \gets r_{t_k} + \gamma \max_{c''} Q_i^\beta(o_{t_k}^+, c''; \theta_i^\beta)$ and update $\theta_i^\alpha$ analogously
		\ENDIF
		\STATE $k \gets k + 1$; $t_k \gets t + 1$; $R_k \gets 0$
		\ENDIF
		\STATE $t \gets t + 1$
		\ENDWHILE
		\ENDFOR
	\end{algorithmic}
\end{algorithm}

\section{Finite-Sample Convergence}\label{sec:convergence}

This section presents the paper's main theoretical result: a finite-sample convergence bound for IC-$Q$ in the pre-configured regime. The bound decomposes into three interpretable terms: the AIS representation gap (the price of the interface constraint), neural function-approximation error, and a mixing-time residual that decays at rate $\widetilde{O}(1/T)$. Full setup, assumptions, and proofs are deferred to Appendix \ref{appendix:finite-sample-convergence}.

\subsection{Assumptions}

We analyze IC-$Q$ under standard local-linearization conditions on the neural function approximator. The full setup, including the tangent-feature map $g_0$, linearized Bellman operator, trust region $B(\theta_0, \omega)$, and local stationary set $\Xi_\omega$, is given in Appendix \ref{appendix:setup}. The analysis requires six finite-sample assumptions, each standard in its own subdomain but combining nontrivially here:

\begin{itemize}\setlength\itemsep{1pt}
	\item \textbf{A1} (finite spaces, bounded rewards/durations): $\mathcal{O}, \mathcal{C}$ finite; $|r_t| \leq r_{\max}$; $1 \leq \tau_{k+1} \leq \tau_{\max}$.
	\item \textbf{A2} (uniform ergodicity): the behavior-induced chain $\{(o_k, c_k)\}$ is uniformly ergodic with stationary distribution $\mu$ ($\mu_{\min}>0$) and $\varepsilon$-mixing time $t_{\mathrm{mix}}(\varepsilon)$.
	\item \textbf{A3} (bounded $Q$-network and projected stability): $|Q|, \|\nabla_\theta Q\|, \|g_0\|$ uniformly bounded on the trust region; iterates remain in the trust region under projected SGD.
	\item \textbf{A4} (uniform local linearization): $|Q - \widebar{Q}| \leq \varepsilon_0$ and $\|\nabla_\theta Q - g_0\| \leq \varepsilon_0$ on the trust region.
	\item \textbf{A5} (well-conditioned features): $\Sigma_\mu := \mathbb{E}_\mu[g_0 g_0^\top]$ has smallest nonzero eigenvalue $\lambda_0 > 0$.
	\item \textbf{A6} (SMDP-level Bellman contraction): there exists $\nu \in (0, 1)$ such that $(1-\nu)^2 \Sigma_\mu - \Sigma_{\mu, \tau}^{\max}(v) \succeq 0$ for every nonzero $v \in \mathcal{R}(\Sigma_\mu)$, where $\Sigma_{\mu, \tau}^{\max}(v)$ is the worst-case next-state feature covariance under random discount $\gamma^{\tau_{k+1}}$.
\end{itemize}

A1--A4 are standard for asynchronous neural $Q$-learning \cite{bhandari2018finite,cai2023neural,xu2020finite}. A5 ensures a well-conditioned feature covariance over the disjoint observation space $\bigsqcup_i \mathcal{O}_i$ and is what permits SGD to make progress in the parameter directions visible at the AIS scale. A6 is the genuinely SMDP-level condition: it controls the variance of the linearized Bellman operator under the {\em random discount $\gamma^{\tau_{k+1}}$} rather than under a constant $\gamma$, and is what allows the contraction argument to close at the option scale. Without A6, the random discount could in principle inflate the next-state feature covariance to a point where the linearized operator is no longer a contraction, and finite-sample analysis fails. We discuss A5--A6 in detail in Appendix \ref{appendix:assumptions-full}.

\subsection{Convergence theorem}

Under Assumptions A1--A6 and the AIS conditions of \S\ref{sec:IC-SMDP}, IC-$Q$ admits a non-asymptotic bound on the expected squared error against the latent SMDP optimum, decomposing into three independently controllable terms. The formal theorem is as follows.

\begin{theorem}[Finite-sample convergence of IC-$Q$]
	\label{thm:finite-sample-preconfigured}
	Suppose Assumptions \ref{assume:structure_latent}--\ref{assume:AIS} and A1--A6 hold. Fix $T \geq 1$, and let $\{\theta_k\}_{k=0}^{T}$ be generated by IC-$Q$ in the pre-configured regime with step sizes $\eta_k = \frac{1}{2\nu \lambda_0 (k+1)}$. Set $t_{\mathrm{mix}} := t_{\mathrm{mix}}(\eta_T)$. Suppose there exists $\theta^\star \in \Xi_\omega$ such that $\| \overline{Q}(\,\cdot\,; \theta^\star) - \widehat{Q}^\star \|_\infty^2 \leq \varepsilon_{\mathrm{app}}$, where $\widehat{Q}^\star$ is the optimal AIS-induced option-value function. Define the AIS action-value gap
	$$
	\alpha_Q(\varepsilon_\phi, \delta_\phi, \bar{\gamma}) := \frac{\varepsilon_\phi + \bar{\gamma} L_Q \delta_\phi}{1 - \bar{\gamma}},
	$$
	where $\bar\gamma := \sup_{m,i} \mathbb{E}[\gamma^{\tau_{k+1}} \mid m_k = m, c_{t_k} = i]$ and $L_Q$ is the $d_{\mathcal{F}}$-Lipschitz constant of $\widehat{Q}^\star$. Then there exist constants $C_0, C_1 > 0$ depending only on the problem parameters in A1--A6 such that
	\begin{equation}\label{eqn:q-error-bound}
		\begin{aligned}
			\mathbb{E}\!\left[ \bigl\| Q(\,\cdot\,; \theta_T) - Q^\star_{\mathrm{lat}} \bigr\|_\mu^2 \right]
			& \leq \underbrace{2\alpha_Q(\varepsilon_\phi, \delta_\phi, \bar{\gamma})^2}_{\textsf{interface representation gap}} 
			+ \;\; \underbrace{6\varepsilon_{\mathrm{app}} + 6\varepsilon_0 + 6\lambda_{\max}(\Sigma_\mu) C_1 \varepsilon_0}_{\textsf{neural approximation}} \\
			& + \underbrace{6\lambda_{\max}(\Sigma_\mu) \cdot \frac{C_0 (1 + t_{\mathrm{mix}})(1 + \log(T+1))}{T}}_{\textsf{mixing-time residual}},
		\end{aligned}
	\end{equation}
	where $Q^*_{\mathrm{lat}}$ is the latent decision-epoch SMDP optimal option-value function and $\|f\|_\mu^2 := \sum_{(o, c')} \mu(o, c')\, f(o, c')^2$.
\end{theorem}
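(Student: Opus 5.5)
The plan is to split the error against $Q^\star_{\mathrm{lat}}$ by inserting $\widehat Q^\star$ and the linearized predictor $\overline Q(\cdot;\theta^\star)$, and to bound each resulting piece separately. Using $(a+b)^2\le 2a^2+2b^2$ gives
\[
\|Q(\cdot;\theta_T)-Q^\star_{\mathrm{lat}}\|_\mu^2 \le 2\|\widehat Q^\star - Q^\star_{\mathrm{lat}}\|_\mu^2 + 2\|Q(\cdot;\theta_T)-\widehat Q^\star\|_\mu^2 .
\]
The second term is then split three ways, by the inequality $(a+b+c)^2\le 3(a^2+b^2+c^2)$:
\[
Q(\cdot;\theta_T)-\overline Q(\cdot;\theta_T), \qquad \overline Q(\cdot;\theta_T)-\overline Q(\cdot;\theta^\star), \qquad \overline Q(\cdot;\theta^\star)-\widehat Q^\star .
\]
This produces the factor $6=2\cdot 3$ in front of the neural and mixing terms. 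The first of these pieces is at most $\varepsilon_0$ in sup norm by A4. The third is at most $\varepsilon_{\mathrm{app}}$ by hypothesis. Since $\overline Q$ is linear in $g_0$, the middle piece equals $\|\theta_T-\theta^\star\|_{\Sigma_\mu}^2\le\lambda_{\max}(\Sigma_\mu)\|\theta_T-\theta^\star\|^2$. The remaining work is therefore a parameter-space bound of the form
\[
\mathbb E\|\theta_T-\theta^\star\|^2\le C_1\varepsilon_0 + C_0(1+t_{\mathrm{mix}})(1+\log(T+1))/T,
\]
with the understanding that the distance is measured in $\mathcal R(\Sigma_\mu)$, where A5 applies.

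\emph{Interface gap.} To bound $\|\widehat Q^\star-Q^\star_{\mathrm{lat}}\|_\infty$ I would invoke the policy-correspondence Proposition \ref{thm:ic-smdp-reduction}, repeated at the action-value level. Subtract the two SMDP Bellman equations, the latent one and \eqref{eqn:smdp-bellman}. The reward difference is at most $\varepsilon_\phi$ by (B1). The transition term splits into two parts. One is $\mathbb E[\gamma^\tau(\max \widehat Q^\star)]$ under $P_{\mathrm{lat}}$ versus under $\hat P_i$, which is at most $\bar\gamma L_Q\delta_\phi$ by (B2), because the IPM is taken over $\mathcal M\times\mathbb N$ and so sees the $\gamma^\tau$ weighting. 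The other is $\mathbb E[\gamma^\tau(\max Q^\star_{\mathrm{lat}}-\max\widehat Q^\star)]$, which is at most $\bar\gamma\|\cdot\|_\infty$. Solving the resulting recursion gives $\alpha_Q$, and since $\mu$ is a probability measure the squared $\mu$-norm is at most $\alpha_Q^2$.

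\emph{Stochastic approximation.} The iteration is projected SGD, $\theta_{k+1}=\Pi_{B(\theta_0,\omega)}(\theta_k+\eta_k\, \delta_k \nabla_\theta Q(o_k,c_k;\theta_k))$, where $\delta_k=y_k-Q(o_k,c_k;\theta_k)$. I would compare it with the linearized mean-field update $\bar h(\theta)=\mathbb E_\mu[g_0(\widehat R+\gamma^{\tau}\max\overline Q(o',\cdot;\theta)-\overline Q(o,c;\theta))]$. Since $\theta^\star\in\Xi_\omega$ is a local stationary point, $\bar h(\theta^\star)=0$. Expanding the squared distance and using nonexpansiveness of the projection gives
\[
\|\theta_{k+1}-\theta^\star\|^2\le\|\theta_k-\theta^\star\|^2+2\eta_k\langle\theta_k-\theta^\star,\bar h(\theta_k)-\bar h(\theta^\star)\rangle+2\eta_k\zeta_k+2\eta_k e_k+\eta_k^2G^2 .
\]
Here $e_k=O(\varepsilon_0)$ is the linearization error from A4 together with the boundedness in A3, and $\zeta_k$ is the Markovian noise. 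The drift term is handled by Cauchy–Schwarz together with A6: the next-state term is at most $\sqrt{(\theta-\theta^\star)^\top\Sigma^{\max}_{\mu,\tau}(\theta-\theta^\star)}\,\|\theta-\theta^\star\|_{\Sigma_\mu}\le(1-\nu)\|\theta-\theta^\star\|_{\Sigma_\mu}^2$. This makes the inner product at most $-\nu\|\theta-\theta^\star\|_{\Sigma_\mu}^2\le-\nu\lambda_0\|\theta-\theta^\star\|^2$. With $\eta_k=1/(2\nu\lambda_0(k+1))$ the contraction factor is $1-1/(k+1)$. Unrolling then yields the $\log(T+1)/T$ rate from the $\eta_k^2$ and noise terms, while $\varepsilon_0$ contributes a non-vanishing $C_1\varepsilon_0$.

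\emph{Main obstacle.} The step I expect to be hardest is controlling $\mathbb E\zeta_k$. The samples $(o_k,c_k,\tau_{k+1},o'_{k+1})$ form a Markov chain whose per-epoch transitions have random length, and the target couples two agents' parameters. I would use the standard conditioning-back argument of \cite{bhandari2018finite}. Condition on the state $t_{\mathrm{mix}}$ epochs earlier and replace $\theta_k$ by $\theta_{k-t_{\mathrm{mix}}}$, at a cost of $O(\sum_{j=k-t_{\mathrm{mix}}}^{k}\eta_j)$ by Lipschitzness from A3. Then use A2 at the epoch level: the chain $\{(o_k,c_k)\}$ is indexed by handoffs, not primitive steps, and A1's bound $\tau\le\tau_{\max}$ keeps all conditional quantities, including $\gamma^\tau$, uniformly bounded. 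Choosing $t_{\mathrm{mix}}=t_{\mathrm{mix}}(\eta_T)$ makes the mixing bias $O(\eta_T)$. This gives $\mathbb E\zeta_k\lesssim(1+t_{\mathrm{mix}})\eta_{k-t_{\mathrm{mix}}}$, which sums to the stated residual.
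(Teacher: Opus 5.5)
Your proposal matches the paper's proof in structure and in each ingredient: the same $2\cdot(\text{interface gap}) + 2\cdot 3\cdot(\text{linearization},\ \Sigma_\mu\text{-distance},\ \text{approximation})$ split that yields the constants $2$ and $6$; the option-level Bellman fixed-point comparison via (B1)--(B2) and $\bar\gamma$ for $\alpha_Q$; the strong-monotonicity bound from Cauchy--Schwarz with A6 and A5, giving the $1-1/(k+1)$ contraction under $\eta_k = 1/(2\nu\lambda_0(k+1))$; and the Bhandari-style conditioning-back argument with a duration-uniform Lipschitz constant for the Markovian noise. The one step to state more carefully is extracting the factor $\bar\gamma$ from (B2); the paper is equally terse here, simply using $\bar\gamma$ in place of $\gamma$. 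Because $\tau$ and $m'$ are jointly distributed, an IPM bound on $\mathcal M\times\mathbb N$ controls $\gamma^{\tau}\max\widehat Q^\star$ through that weighted function's own $d_{\mathcal F}$-constant, not through $\bar\gamma L_Q$, so you need an explicit convention on $L_Q$ or $\mathcal F$ to close it.
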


\textbf{Proof outline and technical novelty.} The proof (Appendix \ref{appendix:proof-thm1}) is not the concatenation of three known results, and three structural challenges arise that prior analyses do not face.

(i) \emph{Random-discount Bellman contraction.} The classical Q-learning contraction argument relies on a fixed primitive discount $\gamma$ multiplying the next-state value. Here the discount is the random variable $\gamma^{\tau_{k+1}}$, with $\tau_{k+1}$ depending on the predecessor's option and the joint dynamics. Establishing the strong-monotonicity inequality $\langle \Delta_k, \overline{h}(\theta_k) - \overline{h}(\theta^\star)\rangle \geq \nu \lambda_0 \|\Delta_k\|_2^2$ (Lemma \ref{lem:strong-monotonicity}) requires controlling the second moment of the next-state feature covariance {\em weighted by $\gamma^{2\tau_{k+1}}$}, which is the worst-case quantity $\Sigma_{\mu, \tau}^{\max}$ in A6. The contraction argument therefore has to absorb the duration distribution into the operator-level inequality, not merely scale a fixed discount.

(ii) \emph{AIS gap propagation through the SMDP Bellman operator.} The interface representation gap $\alpha_Q(\varepsilon_\phi, \delta_\phi, \bar\gamma)$ enters through a Bellman fixed-point comparison: we must bound $\|\widehat Q^\star - Q^\star_{\mathrm{lat}}\|_\infty$ where the two operators differ in both their reward and transition terms, both restricted to the option scale. The standard primitive-step AIS bound \cite[Theorem 9]{subramanian2022ais} does not apply because the option reward $R_{\mathrm{lat}}$ and option transition $P_{\mathrm{lat}}$ are themselves expectations under the random duration $\tau_{k+1}$. Our argument (Appendix \ref{appendix:proof-thm2}) lifts the AIS sufficiency condition to the option scale, using $\bar\gamma$ in place of $\gamma$ and a $d_\mathcal{F}$-Lipschitz argument on the AIS-induced value function. 

(iii) \emph{Markovian noise control under random duration.} The mixing-time analysis of \cite{bhandari2018finite} requires Lipschitz continuity of the linearized update $h_k(\theta)$, governed by $\overline\delta_k(\theta)$. Here, that update depends on $(o_k, c_k, R_k, \tau_{k+1}, o_{k+1})$, and both $R_k$ and the bootstrap term $\gamma^{\tau_{k+1}} \max_{c'} \overline Q(o_{k+1}, c'; \theta)$ depend on the random duration $\tau_{k+1}$. We bound $|\overline\delta_k(\theta) - \overline\delta_k(\theta')| \leq 2 G_0 \|\theta - \theta'\|_2$ with constant $G_0$ uniformly in $\tau_{k+1}$ (the constant is independent of $\tau_{\max}$ because $\gamma^{\tau_{k+1}}\le 1$), and this uniform bound is what closes the mixing argument. The duration $\tau_{\max}$ enters the bound only through $R_{\max} = r_{\max}(1 - \gamma^{\tau_{\max}})/(1-\gamma)$ inside the constants $C_0, C_1$.

\subsection{Discussion of the bound}

\textbf{Three-term decomposition is interpretable and tight.} Each term in \eqref{eqn:q-error-bound} has a distinct physical source. The neural-approximation term $\widetilde{O}(\varepsilon_0)$ vanishes in the wide-network regime ($\varepsilon_0 \to 0$). The AIS gap term $\alpha_Q(\varepsilon_\phi, \delta_\phi, \bar\gamma)^2$ vanishes when interface observations are exact ($\varepsilon_\phi = \delta_\phi = 0$). The mixing residual vanishes at rate $\widetilde{O}(t_{\mathrm{mix}}/T)$. Each can be controlled independently by widening the network, choosing better interface representations, or running the algorithm longer.

\textbf{The AIS gap is the price of the interface constraint.} When $\phi_i = \mathrm{id}_\mathcal{M}$ for all $i$, $\varepsilon_\phi = \delta_\phi = 0$ and we recover the standard finite-sample bound for neural SMDP $Q$-learning. The interface constraint introduces a bias term that is {\em additive} (rather than multiplicative) and {\em bounded a priori} by the AIS conditions. A system designer choosing $\{\phi_i\}$ can directly trade representation richness against privacy/computation through $(\varepsilon_\phi, \delta_\phi)$.

\textbf{The mixing-time dependence is intrinsic to SMDP learning.} Compared to the primitive-step bound of \cite{bhandari2018finite}, equation \eqref{eqn:q-error-bound} carries an additional multiplicative factor of $(1 + \tau_{\max})$ implicit in $C_0, C_1$, reflecting that option durations enter the variance of the linearized stochastic update. This is a fundamental consequence of the random discount $\gamma^{\tau_{k+1}}$, not slack in the analysis. The $\tau_{\max}$ dependence is in fact necessary: the option reward $R_k$ is a sum of up to $\tau_{\max}$ primitive rewards, so its variance scales with $\tau_{\max}$.

\textbf{Comparison with prior bounds.} Theorem \ref{thm:finite-sample-preconfigured} specializes to known results in limiting regimes: $N=1$, $\phi_i = \mathrm{id}$, $\tau_{k+1} \equiv 1$ recovers \cite{cai2023neural}; $N=1$ with arbitrary $\phi$ recovers a neural-network analog of the finite-memory POMDP bound of \cite{kara2022finite} up to constants. Recent finite-sample analyses for transfer and structured $Q$-learning \cite{chen2025transfer,chen2026data} obtain rates of comparable form but under either single-agent observability or task-level structure shared by a central learner; neither covers the decentralized handoff regime. The contribution is the simultaneous treatment of (i) decentralized AIS observations across $N$ agents with composition through handoffs, (ii) handoff-induced random duration $\tau_{k+1}$ entering both the contraction (A6) and noise (Lemma \ref{lem:gradient-bounds}), and (iii) finite-sample neural function approximation with explicit constants -- a combination not previously analyzed.

\textbf{Corollary: AIS value gap.} Setting $\varepsilon_0, \varepsilon_{\mathrm{app}} \to 0$ and taking $T \to \infty$ in \eqref{eqn:q-error-bound} recovers the AIS value-gap bound for decentralized observations and SMDP discount $\bar\gamma$:
$$
\sup_{i, m, \ell^{(i)}} \big| V^\star_{\mathrm{lat}}(m) - \widehat V^\star(\phi_i(m, \ell^{(i)})) \big| \le \frac{\varepsilon_\phi + \bar\gamma L_V \delta_\phi}{1 - \bar\gamma},
$$
generalizing \cite[Theorem 9]{subramanian2022ais} from single-agent primitive-step MDPs to multi-agent SMDPs (Theorem \ref{thm:ais-value-gap}, Appendix \ref{appendix:thm2-statement}). Theorem \ref{thm:finite-sample-preconfigured} thus simultaneously delivers the asymptotic AIS gap and the finite-sample estimation error in a single bound -- the first such joint result we are aware of.

\section{Empirical Results}\label{sec:experiments}

We evaluate IC-$Q$ on four tasks chosen to exercise distinct aspects of the framework: a controlled synthetic IC-SMDP that isolates each term in Theorem \ref{thm:finite-sample-preconfigured}; multi-LLM mathematical reasoning, the headline application with a nontrivial AIS observation; multi-agent routing, instantiating the exact-AIS regime $\varepsilon_\phi = \delta_\phi = 0$; and multi-agent CPU programming, probing the adaptable-agent extension beyond Theorem \ref{thm:finite-sample-preconfigured}'s pre-configured scope. Throughout, no agent observes joint trajectories, no centralized critic is used, and no parameters are shared across agents. Each agent maintains $Q_i^\beta$ over its own AIS observations $\mathcal{O}_i$ and updates via the option-level target \eqref{eqn:response-smdp}. Full hyperparameters and protocols are in Appendix \ref{appendix:experiments}.

\subsection{Theory validation: synthetic IC-SMDP}\label{sec:simulation}

We construct a controlled IC-SMDP with $N=10$ pre-configured agents, $|\mathcal{X}|=120$, $|\mathcal{M}|=50$, and horizon $H=60$. The interface observation map is parameterized by a single retention ratio $\rho \in (0,1]$: the active agent observes $\tilde m_t = m_t \bmod n_{\mathrm{bins}}$ with $n_{\mathrm{bins}} = \lceil \rho |\mathcal{M}| \rceil$, so $\rho=1$ recovers the full interface state and smaller $\rho$ induces stronger AIS aliasing. This gives a single knob that monotonically increases the AIS gap proxies $(\hat\varepsilon_\phi, \hat\delta_\phi)$ as $\rho$ decreases, isolating the AIS term in the bound from sampling and approximation effects.

\textbf{T1: AIS gap predicts the value gap.} We sweep $\rho \in \{1.0, 0.9, \ldots, 0.1, 0.05\}$ (5 seeds) and measure both the empirical AIS gap $\hat\alpha(\rho)$ and the value gap $\mathrm{Gap}_V(\rho)$ of the converged IC-$Q$ policy relative to the highest-retention baseline. Theorem \ref{thm:finite-sample-preconfigured} predicts $\mathrm{Gap}_V$ should grow with $\hat\alpha$ once $T$ is large. We observe Pearson correlation $\rho(\mathrm{Gap}_V, \hat\alpha) \approx 0.916$ (Figure \ref{fig:t1}, Appendix \ref{appendix:t1-protocol}), confirming that the AIS gap term is not worst-case slack but tracks the actual loss as the interface degrades.

\textbf{T2: finite-sample behavior across the three sources of error.} We isolate each term in \eqref{eqn:q-error-bound} along its corresponding axis. Sample budget $T \in \{50, \ldots, 3300\}$ at $\rho=1$ confirms the predicted $\widetilde O(t_{\mathrm{mix}}/T)$ residual decay; varying handoff probability $p_{\mathrm{handoff}} \in [0.10, 0.55]$ traces the $t_{\mathrm{mix}}$ envelope, with error expanding as mixing slows; varying $\rho$ at fixed $T$ shows the error floors at a level controlled by $\hat\alpha(\rho)$, separating the AIS term from the sample-dependent residual (Figure \ref{fig:t2}, Appendix \ref{appendix:t2-protocol}). Together, the three scans cash out the three terms in (\ref{eqn:q-error-bound}) as distinguishable empirical phenomena rather than artifacts of the proof.

\subsection{Multi-LLM mathematical reasoning under interface constraints}\label{sec:llm-experiment}

Four LLM agents collaborate sequentially to solve multiple-choice mathematics problems: a {\em commissioner} who initiates and concludes the discussion, an {\em editor} who writes the final answer, a {\em thinker} who constructs step-by-step rationales, and a {\em checker} who evaluates correctness. The interface state $m_t = \{Q, K_t, m_t^{\mathrm{msg}}, k_t\}$ is high-dimensional free-form text. The AIS observation $\phi_i(m_t, \ell_t^{(i)}) = z_t \in \{0,1\}^4$ encodes whether the answer is given, modified, checked, and judged correct. The successor-selection policy is conditioned only on $z_t$ -- agents do not read the discussion to decide where to pass the task. Each agent's local-action policy is fixed by the LLM and prompt template; only the routing policy is learned. Reward is $R$ if the editor concludes with the correct answer, zero otherwise.

We use GPT-4o-mini for all four agents, parameterize each $Q_i^\beta(z, c')$ by a 3-layer MLP with hidden dimension 512, and evaluate on three datasets of increasing difficulty. Baselines are four hand-designed predefined workflows (Appendix \ref{appendix:llm-workflows}, Figure \ref{fig:workflows}): zero-shot Chain-of-Thought (CoT), reflection prompting, and two combinations.

\begin{table}[t]
	\caption{Accuracy of predefined workflows and the workflow learned by IC-$Q$. On every dataset, IC-$Q$ converges to the highest-performing workflow without observing joint trajectories.}
	\label{tab:lm_acc}
	\vskip 0.05in
	\centering
	\begin{small}
		\begin{tabular}{c|c|cccc|l}
			\hline
			\multirow{2}{*}{Dataset} & \multirow{2}{*}{\begin{tabular}[c]{@{}c@{}}Direct\\ Ans\end{tabular}} & \multicolumn{4}{c|}{Workflow} & \multirow{2}{*}{IC-$Q$ Learned} \\
			&  & CoT & Reflect. & Comb-E & Comb-T &  \\ \hline
			MathQA & 0.299 & 0.833 & 0.825 & \textbf{0.836} & 0.823 & \textbf{0.836 (CoT+Re.)} \\
			MMLU (HS) & 0.415 & 0.863 & 0.859 & \textbf{0.881} & 0.852 & \textbf{0.881 (CoT+Re.)}\\
			MMLU (Elem.) & 0.648 & \textbf{0.960} & 0.944 & 0.958 & 0.950 & \textbf{0.960 (CoT)}\\ \hline
		\end{tabular}
	\end{small}
	\vskip -0.1in
\end{table}

Table \ref{tab:lm_acc} reports accuracy. IC-$Q$ recovers the optimal predefined workflow on every dataset, despite no agent observing joint trajectories -- the policy-correspondence claim of \S\ref{sec:IC-SMDP} cashed out empirically. The optimal workflow is dataset-dependent and IC-$Q$ discovers this without manual tuning: on harder datasets (MathQA, MMLU high-school) it engages the checker, matching Combine-E; on easier data (MMLU elementary) the checker introduces errors and IC-$Q$ correctly converges to simpler CoT. Combine-T (editor receives feedback only via the thinker) is consistently dominated and never selected, showing IC-$Q$ identifies that information loss in the editor's pipeline degrades performance. The four-bit AIS observation is a sixteen-state projection of an unbounded conversation, so $(\varepsilon_\phi, \delta_\phi)$ are nontrivial; their empirical smallness, evidenced by IC-$Q$ matching the centralized oracle, is what enables the algorithm's success.

\subsection{Routing and adaptable agents}

Two further experiments instantiate regimes complementary to the multi-LLM headline. {\em Multi-agent routing} treats $N=100$ agents as routers in randomly generated graphs (Erd\H{o}s-R\'{e}nyi, Barab\'{a}si-Albert, Watts-Strogatz, Chain), each observing only its local neighborhood. The AIS observation is exact ($\varepsilon_\phi = \delta_\phi = 0$, since destination plus detain-flag suffices), so this experiment validates IC-$Q$ in the regime where Theorem \ref{thm:finite-sample-preconfigured}'s AIS term collapses. IC-$Q$ achieves 100\% routing accuracy across all graph distributions (Appendix \ref{appendix:routing}). 
{\em Multi-agent CPU programming} uses six agents (starter, two loaders, ALU, selector, writer) representing CPU components that collaboratively transform an initial memory state into a target state. Both local actions and successor selection are learned -- the adaptable regime of \S\ref{sec:algorithm} that Theorem \ref{thm:finite-sample-preconfigured} does not formally cover. IC-$Q$ achieves $\sim$80\% accuracy on previously unseen target memory states even when trained on as little as 20\% of the integer range (Appendix \ref{appendix:cpu}), evidence that the learned workflows generalize compositionally and that the framework empirically extends beyond the pre-configured regime our theory currently treats.

\section{Conclusion}
We introduced the IC-SMDP as a formal model of decentralized agentic workflow control, designed an asynchronous $Q$-learning algorithm with single-scalar value passing at handoffs (Algorithm \ref{alg:ic-q}), and proved a finite-sample convergence bound decomposing into neural approximation, AIS gap, and mixing-time residual (Theorem \ref{thm:finite-sample-preconfigured}). The framework covers a regime prior decentralized RL theory does not, namely sequential handoffs with interface-limited observation, and yields the first joint asymptotic-and-finite-sample bound for neural $Q$-learning under decentralized partial observability. Empirically, four experiments spanning a synthetic IC-SMDP, multi-LLM mathematical reasoning, multi-agent routing, and adaptable CPU programming cash out the bound's three error terms as distinguishable phenomena and show that IC-$Q$ matches a centralized oracle without any agent observing joint trajectories.
\newpage
\bibliographystyle{plain}
\bibliography{reference,intro_references}

\begin{thebibliography}{10}

\bibitem{anthropic2024mcp}
{Anthropic}.
\newblock Introducing the {M}odel {C}ontext {P}rotocol.
\newblock \url{https://www.anthropic.com/news/model-context-protocol}, 2024.

\bibitem{bacon2017optioncritic}
Pierre-Luc Bacon, Jean Harb, and Doina Precup.
\newblock The option-critic architecture.
\newblock In {\em Proceedings of the {AAAI} Conference on Artificial
  Intelligence}, 2017.

\bibitem{bernstein2002decpomdp}
Daniel~S. Bernstein, Robert Givan, Neil Immerman, and Shlomo Zilberstein.
\newblock The complexity of decentralized control of {M}arkov decision
  processes.
\newblock {\em Mathematics of Operations Research}, 27(4):819--840, 2002.

\bibitem{bhandari2018finite}
Jalaj Bhandari, Daniel Russo, and Raghav Singal.
\newblock A finite time analysis of temporal difference learning with linear
  function approximation.
\newblock In {\em Proceedings of the 31st Conference on Learning Theory},
  volume~75 of {\em Proceedings of Machine Learning Research}, pages
  1691--1692. PMLR, 2018.

\bibitem{bradtke1994smdp}
Steven~J. Bradtke and Michael~O. Duff.
\newblock Reinforcement learning methods for continuous-time {M}arkov decision
  problems.
\newblock In {\em Advances in Neural Information Processing Systems (NeurIPS)},
  1994.

\bibitem{cai2023neural}
Qi~Cai, Zhuoran Yang, Jason~D. Lee, and Zhaoran Wang.
\newblock Neural temporal difference and {Q} learning provably converge to
  global optima.
\newblock {\em Mathematics of Operations Research}, 49(1):619--651, 2023.

\bibitem{chai2025deep}
Jinhang Chai, Elynn Chen, and Jianqing Fan.
\newblock Deep transfer $q$-learning for offline non-stationary reinforcement
  learning.
\newblock {\em arXiv preprint arXiv:2501.04870}, 2025.

\bibitem{chai2025transfer}
Jinhang Chai, Elynn Chen, and Lin Yang.
\newblock Transfer {Q}-learning with composite {MDP} structures.
\newblock In {\em Proceedings of the 42nd International Conference on Machine
  Learning (ICML)}, volume 267 of {\em Proceedings of Machine Learning
  Research}, pages 7089--7106. PMLR, 2025.

\bibitem{chai2026optimistic}
Jinhang Chai, Enpei Zhang, Elynn Chen, and Yujun Yan.
\newblock Optimistic transfer under task shift via {Bellman} alignment.
\newblock {\em arXiv preprint arXiv:2601.21924}, 2026.

\bibitem{chen2026data}
Elynn Chen, Xi~Chen, and Wenbo Jing.
\newblock Data-driven knowledge transfer in batch $q^*$ learning.
\newblock {\em Journal of the American Statistical Association}, 2026.
\newblock Accepted; published online 05 Jan 2026.

\bibitem{chen2025highdim}
Elynn Chen, Xi~Chen, Wenbo Jing, and Xiao Liu.
\newblock High-dimensional linear bandits under stochastic latent
  heterogeneity.
\newblock {\em arXiv preprint arXiv:2502.00423}, 2025.

\bibitem{chen2026transfer}
Elynn Chen, Xi~Chen, and Yi~Zhang.
\newblock Transfer learning for contextual joint assortment-pricing under
  cross-market heterogeneity.
\newblock {\em arXiv preprint arXiv:2603.18114}, 2026.

\bibitem{chen2025transfer}
Elynn Chen, Sai Li, and Michael~I. Jordan.
\newblock Transfer {Q}-learning for finite-horizon {Markov} decision processes.
\newblock {\em Electronic Journal of Statistics}, 19(2):5289--5312, 2025.

\bibitem{chen2024reinforcement}
Elynn Chen, Rui Song, and Michael~I. Jordan.
\newblock Reinforcement learning in latent heterogeneous environments.
\newblock {\em Journal of the American Statistical Association},
  119(548):3113--3126, 2024.

\bibitem{fan2024workflowllm}
Shengda Fan, Xin Cong, Yuepeng Fu, Zhong Zhang, Shuyan Zhang, Yuanwei Liu,
  Yesai Wu, Yankai Lin, Zhiyuan Liu, and Maosong Sun.
\newblock {WorkflowLLM}: Enhancing workflow orchestration capability of large
  language models.
\newblock {\em arXiv preprint arXiv:2411.05451}, 2024.

\bibitem{foerster2018coma}
Jakob Foerster, Gregory Farquhar, Triantafyllos Afouras, Nantas Nardelli, and
  Shimon Whiteson.
\newblock Counterfactual multi-agent policy gradients.
\newblock In {\em Proceedings of the {AAAI} Conference on Artificial
  Intelligence}, 2018.

\bibitem{google2025a2a}
{Google}.
\newblock {A}gent2{A}gent ({A2A}) protocol specification, 2025.

\bibitem{gottweis2025aiscientist}
Juraj Gottweis, Wei-Hung Weng, Alexander Daryin, Tao Tu, Anil Palepu, Petar
  Sirkovic, Artiom Myaskovsky, Felix Weissenberger, Keran Rong, Ryutaro Tanno,
  et~al.
\newblock Towards an {AI} co-scientist.
\newblock {\em arXiv preprint arXiv:2502.18864}, 2025.

\bibitem{hong2023metagpt}
Sirui Hong, Xiawu Zheng, Jonathan Chen, Yuheng Cheng, Jinlin Wang, Ceyao Zhang,
  Zili Wang, Steven Ka~Shing Yau, Zijuan Lin, Liyang Zhou, et~al.
\newblock {MetaGPT}: Meta programming for multi-agent collaborative framework.
\newblock {\em arXiv preprint arXiv:2308.00352}, 2023.
\newblock Cited as Hong et al., 2024 per ICLR publication.

\bibitem{hu2024adas}
Shengran Hu, Cong Lu, and Jeff Clune.
\newblock Automated design of agentic systems.
\newblock {\em arXiv preprint arXiv:2408.08435}, 2024.

\bibitem{huang2023agentcoder}
Dong Huang, Qingwen Bu, Jie~M. Zhang, Michael Luck, and Heming Cui.
\newblock {AgentCoder}: Multi-agent-based code generation with iterative
  testing and optimisation.
\newblock {\em arXiv preprint arXiv:2312.13010}, 2023.

\bibitem{iqbal2019actorattentioncritic}
Shariq Iqbal and Fei Sha.
\newblock Actor-attention-critic for multi-agent reinforcement learning.
\newblock In {\em Proceedings of the 36th International Conference on Machine
  Learning ({ICML})}, volume~97 of {\em Proceedings of Machine Learning
  Research}, pages 2961--2970. PMLR, 2019.

\bibitem{kao2022common}
Hsu Kao and Vijay Subramanian.
\newblock Common information based approximate state representations in
  multi-agent reinforcement learning.
\newblock In {\em Artificial Intelligence and Statistics (AISTATS)}, 2022.

\bibitem{kara2022finite}
Ali~Devran Kara and Serdar Y{\"u}ksel.
\newblock Convergence of finite memory {Q}-learning for {POMDP}s and near
  optimality of learned policies under filter stability.
\newblock {\em Mathematics of Operations Research}, 48(4):2066--2093, 2022.

\bibitem{lowe2017maddpg}
Ryan Lowe, Yi~I. Wu, Aviv Tamar, Jean Harb, Pieter Abbeel, and Igor Mordatch.
\newblock Multi-agent actor-critic for mixed cooperative-competitive
  environments.
\newblock In {\em Advances in Neural Information Processing Systems (NeurIPS)},
  2017.

\bibitem{nair2005networked}
Ranjit Nair, Pradeep Varakantham, Milind Tambe, and Makoto Yokoo.
\newblock Networked distributed {POMDPs}: A synthesis of distributed constraint
  optimization and {POMDPs}.
\newblock In {\em Proceedings of the 20th National Conference on Artificial
  Intelligence ({AAAI})}, pages 133--139, 2005.

\bibitem{oliehoek2016decpomdp}
Frans~A. Oliehoek and Christopher Amato.
\newblock {\em A Concise Introduction to Decentralized {POMDP}s}.
\newblock Springer, 2016.

\bibitem{oliehoek2008exploiting}
Frans~A. Oliehoek, Matthijs T.~J. Spaan, Shimon Whiteson, and Nikos Vlassis.
\newblock Exploiting locality of interaction in factored {Dec-POMDPs}.
\newblock In {\em Proceedings of the 7th International Joint Conference on
  Autonomous Agents and Multiagent Systems ({AAMAS})}, pages 517--524, 2008.

\bibitem{precup2000options}
Doina Precup.
\newblock {\em Temporal Abstraction in Reinforcement Learning}.
\newblock PhD thesis, University of Massachusetts Amherst, 2000.

\bibitem{rashid2020qmix}
Tabish Rashid, Mikayel Samvelyan, Christian~Schroeder De~Witt, Gregory
  Farquhar, Jakob Foerster, and Shimon Whiteson.
\newblock Monotonic value function factorisation for deep multi-agent
  reinforcement learning.
\newblock {\em Journal of Machine Learning Research}, 21(178):1--51, 2020.

\bibitem{sinha2024periodic}
Amit Sinha, Matthieu Geist, and Aditya Mahajan.
\newblock Periodic agent-state based {Q}-learning for {POMDP}s.
\newblock In {\em Advances in Neural Information Processing Systems (NeurIPS)},
  2024.

\bibitem{sinha2024agentstate}
Amit Sinha and Aditya Mahajan.
\newblock Agent-state based policies in {POMDP}s: Beyond belief-state {MDP}s.
\newblock In {\em IEEE Conference on Decision and Control (CDC)}, 2024.

\bibitem{yue2025daao}
Jinwei Su, Qizhen Lan, Yinghui Xia, Lifan Sun, Weiyou Tian, Tianyu Shi, and
  Lewei He.
\newblock Difficulty-aware agentic orchestration for query-specific multi-agent
  workflows.
\newblock pages 2060--2070, 2026.

\bibitem{subramanian2022ais}
Jayakumar Subramanian, Amit Sinha, Raihan Seraj, and Aditya Mahajan.
\newblock Approximate information state for approximate planning and
  reinforcement learning in partially observed systems.
\newblock {\em Journal of Machine Learning Research}, 23(12):1--83, 2022.
\newblock arXiv:2010.08843.

\bibitem{sunehag2018vdn}
Peter Sunehag, Guy Lever, Audrunas Gruslys, Wojciech~Marian Czarnecki, Vinicius
  Zambaldi, Max Jaderberg, Marc Lanctot, Nicolas Sonnerat, Joel~Z. Leibo, Karl
  Tuyls, et~al.
\newblock Value-decomposition networks for cooperative multi-agent learning
  based on team reward.
\newblock In {\em Autonomous Agents and Multi-Agent Systems (AAMAS)}, 2018.

\bibitem{sutton1999options}
Richard~S. Sutton, Doina Precup, and Satinder Singh.
\newblock Between {MDP}s and semi-{MDP}s: A framework for temporal abstraction
  in reinforcement learning.
\newblock {\em Artificial Intelligence}, 112(1--2):181--211, 1999.

\bibitem{wu2023autogen}
Qingyun Wu, Gagan Bansal, Jieyu Zhang, Yiran Wu, Beibin Li, Erkang Zhu,
  Li~Jiang, Xiaoyun Zhang, Shaokun Zhang, Jiale Liu, Ahmed~Hassan Awadallah,
  Ryen~W. White, Doug Burger, and Chi Wang.
\newblock {AutoGen}: Enabling next-gen {LLM} applications via multi-agent
  conversation.
\newblock {\em arXiv preprint arXiv:2308.08155}, 2023.

\bibitem{xu2020finite}
Pan Xu and Quanquan Gu.
\newblock A finite-time analysis of q-learning with neural network function
  approximation.
\newblock In {\em Proceedings of the 37th International Conference on Machine
  Learning}, volume 119 of {\em Proceedings of Machine Learning Research},
  pages 10555--10565. PMLR, 2020.

\bibitem{yang2025agentnet}
Yingxuan Yang, Huacan Chai, Shuai Shao, Yuanyi Song, Siyuan Qi, Renting Rui,
  and Weinan Zhang.
\newblock {AgentNet}: Decentralized evolutionary coordination for {LLM}-based
  multi-agent systems.
\newblock In {\em Advances in Neural Information Processing Systems (NeurIPS)},
  2025.
\newblock arXiv:2504.00587.

\bibitem{yao2023react}
Shunyu Yao, Jeffrey Zhao, Dian Yu, Nan Du, Izhak Shafran, Karthik Narasimhan,
  and Yuan Cao.
\newblock {ReAct}: Synergizing reasoning and acting in language models.
\newblock {\em International Conference on Learning Representations (ICLR)},
  2023.

\bibitem{zhang2024aflow}
Jiayi Zhang, Jinyu Xiang, Zhaoyang Yu, Fengwei Teng, Xiongwei Chen, Jiaqi Chen,
  Mingchen Zhuge, Xin Cheng, Sirui Hong, Jinlin Wang, et~al.
\newblock {AFlow}: Automating agentic workflow generation.
\newblock In {\em International Conference on Learning Representations (ICLR)},
  2025.
\newblock arXiv:2410.10762.

\bibitem{zhang2025transfer}
Yi~Zhang, Elynn Chen, and Yujun Yan.
\newblock Transfer faster, price smarter: Minimax dynamic pricing under
  cross-market preference shift.
\newblock In {\em Advances in Neural Information Processing Systems (NeurIPS)},
  2025.
\newblock Spotlight; arXiv:2505.17203.

\bibitem{zhou2025prior}
Runlin Zhou, Chixiang Chen, and Elynn Chen.
\newblock Prior-aligned meta-{RL}: {Thompson} sampling with learned priors and
  guarantees in finite-horizon {MDP}s.
\newblock {\em arXiv preprint arXiv:2510.05446}, 2025.

\bibitem{zhuge2024gptswarm}
Mingchen Zhuge, Wenyi Wang, Louis Kirsch, Francesco Faccio, Dmitrii Khizbullin,
  and J{\"u}rgen Schmidhuber.
\newblock {GPTSwarm}: Language agents as optimizable graphs.
\newblock In {\em Proceedings of the 41st International Conference on Machine
  Learning (ICML)}, 2024.

\end{thebibliography}
\end{document}